\newtheorem{theorem}{Theorem}
\newtheorem{lemma}{Lemma}
\theoremstyle{definition}
\newtheorem{definition}{Definition}
\newcommand{\len}[1]{\lVert#1\rVert}
\newcommand{\trunc}[1]{\left[#1\right]}
\newcommand{\head}[1]{\mathit{Att}_{#1}}
\newcommand{\scoring}{\ensuremath{\sigma}\xspace}
\newcommand{\N}{\mathbb{N}}
\newcommand{\Z}{\mathbb{Z}}
\newcommand{\Q}{\mathbb{Q}}
\newcommand{\R}{\mathbb{R}}
\newcommand{\transpose}{^{\mathrm T}}
\newcommand{\F}{\mathbb{F}}
\newcommand{\TC}{\mathsf{TC}}
\newcommand{\AC}{\mathsf{AC}}
\newcommand{\mostsim}{\mathcal{M}}
\newcommand{\igate}{\mathsf{in}}
\newcommand{\fmax}{\mathsf{max}}
\newcommand{\feq}{\mathsf{eq}}
\newcommand{\fsel}{\mathsf{sel}}
\newcommand{\fsum}{\mathsf{sum}}
\newcommand{\fdiv}{\mathsf{div}}
\newcommand{\enc}{\mathsf{enc}}
\newcommand{\true}{1}
\newcommand{\false}{0}
\newcommand{\absatz}{\par\vspace{1em}\noindent}
\title{Average-Hard Attention Transformers are Constant-Depth Uniform Threshold Circuits}
\author{Lena Strobl \\
  Department of Computing Science \\
  Ume{\aa} University, Sweden \\
  \href{mailto:lena.strobl@umu.se}{lena.strobl@umu.se} \\}
\begin{document}

\maketitle

\begin{abstract}
    Transformers have emerged as a widely used neural network model for various natural language processing tasks.
    Previous research explored their relationship with constant-depth threshold circuits, making two assumptions: average-hard attention and logarithmic precision for internal computations relative to input length.
    \citet{MerrillSabharwalSmith2021} prove that average-hard attention transformers recognize languages that fall within the complexity class $\TC^0$, denoting the set of languages that can be recognized by constant-depth polynomial-size threshold circuits.
    Likewise, \citet{MerrillSabharwal2023} show that log-precision transformers recognize languages within the class of \emph{uniform} $\TC^0$.
    This shows that both transformer models can be simulated by constant-depth threshold circuits, with the latter being more robust due to generating a uniform circuit family.
    This paper shows that the first result can be extended to yield uniform circuits as well.
\end{abstract}
\section{Introduction}

The dominance of recurrent neural network (RNN) architectures in the realm of natural language processing gradually waned with the advent of transformers, as initially introduced by \citet{vaswani2017attention}.
Unlike RNNs, which heavily rely on autoregressive mechanisms, transformers revolutionized the field by leveraging parallelism to process sequential data.

While RNNs could be analyzed through the lens of automata theory (notably by \citet{weiss2018practical, peng2018rational}) thanks to their recurrence-based nature, the characterization of transformers necessitates a different approach.
Considering the circuit-based perspective seems natural, given the absence of explicit recurrence in transformers.
Notably, some studies have attempted to reintroduce recurrences into transformers, as exemplified by the work on shortcut connections by \citet{liu2023transformers}.
However, for coherence and maintaining the focus of our discussion, we will refrain from delving deeper into these tangential directions.

Recent advances in the analysis of transformer models have shed light on their computational capabilities, particularly through the investigation of two distinct formal models: average-hard (= saturated) by \citet{MerrillSabharwalSmith2021} and softmax (= soft) transformers by \citet{MerrillSabharwal2023}.
Average-hard attention enables a connection to be established between these models and devices of formal language theory.
Building upon this line of research, \citet{MerrillSabharwal2023} introduced a different model, demonstrating that transformer networks with logarithmic precision in relation to the input length can be simulated by constant-depth uniform threshold circuits.
Consequently, the complexity class $\TC^0$ serves as an upper bound for the formal languages recognized by these transformers.

Motivated by the inherent uniformity possessed by transformers, we want to investigate whether average-hard attention transformers only recognize languages in uniform $\TC^0$.
Our primary contribution lies in our proof, showcasing that average-hard attention transformers can indeed be simulated by uniform  $\TC^0$ circuits, thereby solidifying their association with uniform $\TC^0$.
Consequently, these transformers are inherently limited to solving problems within uniform $\TC^0$.

This result does not follow from the result presented by \citet{MerrillSabharwal2023}, as both the underlying assumptions and the specific attention mechanisms differ between the two studies.
Concretely, we consider the implications of the results from \citet{MerrillSabharwal2023} and \citet{MerrillSabharwalSmith2021}:
\citet[Theorem 4]{MerrillSabharwalSmith2021} demonstrated that average-hard attention transformers are only capable of producing floating-point numbers of logarithmic size.
Consequently, one might argue that average-hard attention transformers can be considered log-precision transformers, and therefore the result \citet{MerrillSabharwal2023} establish should be applicable in this context.
\citet{MerrillSabharwalSmith2021} Theorem 4 relies on the assumption of \enquote{size preserving} functions, while we adopt the fundamental definitions provided by \citet{MerrillSabharwal2023}.
This discrepancy in the underlying assumptions creates a distinction between the two frameworks.
Furthermore, it should be emphasized that the attention mechanism \citet{MerrillSabharwal2023} employed is softmax, which is a difference in the formal definition of attention itself.
As a result, even disregarding the question of precision, the direct applicability of \citet{MerrillSabharwal2023} result would be incorrect as it would disregard the difference in the attention mechanisms.

\absatz
The findings of this paper open up new approaches for future research, specifically in exploring the distinction between average-hard and softmax attention mechanisms, with the potential to unveil a clear demarcation between the two.

\begin{table}[!ht]
\centering
\includegraphics[width=\columnwidth]{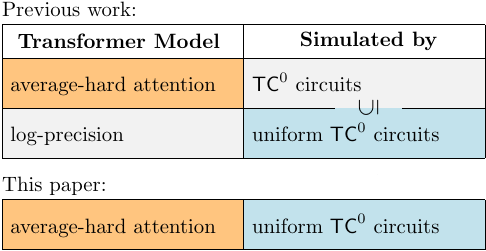}
\caption{Overview of previous results by \citet{MerrillSabharwalSmith2021,MerrillSabharwal2023} (top) and the  contribution of this paper (bottom).
Previous research demonstrated the simulation of average-hard attention transformers using $\TC^0$ circuits with integer values.
Another study showed the simulation of log-precision transformers with uniform $\TC^0$ circuits.
In this paper, we extend these results by demonstrating the simulation of average-hard attention transformers with uniform $\TC^0$ circuits.}
\label{table:1}
\end{table}

\section{Preliminaries}

In this section, we establish the foundational definitions and notation for circuit computations, drawing from the textbook of \citet{arora2006computational} in Chapters 6 and 14.
This established framework forms the basis for our subsequent analysis.

Moreover, we revisit the average-hard attention transformer model proposed by \citet{MerrillSabharwal2023}.
To do so, we provide definitions of average-hard attention and average-hard attention heads.
These essential concepts serve as the cornerstone of the average-hard attention transformer model.

\subsection{Basic Mathematical Notation and Definitions}

We employ the notation and definitions commonly used in mathematics and formal language theory.
Specifically, we denote the sets of natural numbers, including zero, and integers as $\N$ and $\Z$ respectively.

\absatz
For any natural number $n$, the set containing the numbers from $1$ to $n$ (inclusive) is denoted as $[n]$.
Notably, when $n = 0$, we represent the set as $\emptyset$.

\absatz
The set of all strings composed of elements from a given set $\Sigma$ is represented as $\Sigma^*$.
Here, we denote the empty string as $\epsilon$, and we define $\Sigma^+=\Sigma^*\setminus\{\epsilon\}$.

\absatz
The canonical extension of a function $f\colon\Sigma\to\Delta$ to a function from $\Sigma^*$ to $\Delta^*$ is denoted by $f$ as well.
Thus, $f(\sigma_1\cdots\sigma_n)=f(\sigma_1)\cdots f(\sigma_n)$ for all $\sigma_1,\dots,\sigma_n\in\Sigma$.
This notation allows us to apply the function $f$ to each individual element within the string.

\absatz
Due to the inherent limitations of Boolean circuits, which can only process values of $\true$ and $\false$, representing floating point numbers used in neural networks becomes a challenge.
To accommodate this discrepancy, these numerical values are transformed into bit strings, belonging to the set $\{0,1\}^*$.
Furthermore, the operations performed on these bit strings must be simulated through Boolean operations, which are the fundamental building blocks available to the specific circuit type under consideration.
Consequently, any manipulations or computations on these floating point representations necessitate a translation into operations that can be expressed using the available Boolean operations.

\paragraph{Binary representation.}
The binary representation of $n\in\Z$ is the unique string \[w=b_0b_1\cdots b_m\in\{0,1\}^+\] with $b_1 = 1$ if $m>0$, and $n = -1 ^ {b_0}\sum_{i=1}^mb_i2^{i-1}$.
We denote the length $m$ of this representation by $\len n$, i.e., \[ \len n=\lceil\log_2(|n|+1)\rceil+1 .\]

\paragraph{Precision.}
Let $p \in \N$ be called \emph{precision}.
Following \citet{MerrillSabharwal2023} work, we define the set $\F_p$ to be the set of all rational numbers that can be written as $m\cdot 2^z$ where $m,z\in\Z$ are such that $\len m,\len z\le p/2$.
(Thus, we may always assume that $p$ is positive and even because $\F_0= \emptyset$ and for odd $p$, $\F_p = \F_{p-1}$.)

In other words, a number in $\F_p$ can be specified by two bit strings of length $p/2$ denoting the mantissa $m$ and the exponent $z$.

\paragraph{Arithmetic on floats.}
Float arithmetic involves performing operations on floating-point numbers by first carrying out computations in $\Q$ and then managing potential overflow and excess bits.

\noindent
To formalize this process, we introduce a value $q$, defined as $2^{\lfloor p/2\rfloor-1}-1$, where $p$ represents the precision.
This value represents the largest natural number $q$ such that $||q|| \leq p/2$.

\noindent
Given a rational number $r \in \mathbb \Q$, and let $\trunc r_p$ denote the truncation of $r$ to a float in $\F_p$, assuming $r > 0$.
This truncation is defined as follows.
To determine the exponent $z$, we select a value within the range $-q$ to $q$ such that multiplying~$r$ by $2^z$ scales it as much as possible without exceeding~$q$.
Next, we truncate the mantissa, retaining up to $\lfloor p/2\rfloor-1$ bits (unless $z=q$, indicating that the exponent would result in an overflow).

\noindent
Formally,
\[\trunc r_p = \begin{cases}
  -\trunc{-r}_p  & \text{if } r<0, \\
  q\cdot2^q  & \text{if } r>q\cdot2^q,\text{ and} \\
    \lfloor q\cdot 2^{z}\rfloor\cdot 2^{-z} & \text{if $0\le r\le q\cdot2^q$,}
\end{cases}\]
where $z$ is the largest integer such that $-q\le z\le q$ and $r\cdot 2^{z}\le q$.

Note, that the choice of $z$ in the third case ensures that we retain the maximum number of bits in the mantissa during the truncation process.

For instance, we have $p=6$ and $r=0.0101_2$, then we select $z=q=3$.
Consequently, the truncation of $r$ to $p$ bits, denoted as $\trunc r_p$, is given by $\lfloor 10.1_2 \rfloor \cdot 2^{-3} = 0.01_2$.

\subsection{Circuit computations}

\paragraph{Circuits.}
A \emph{Boolean circuit}, denoted as $C$, is a directed acyclic procedural computational graph that encompasses binary input gates, represented as $\igate_1,\dots,\igate_n$, which serve as the leaf nodes of the graph.
These input gates correspond to $n$ input values, each taking the value of either $\true$ or $\false$.
Intermediate nodes within the circuit, referred to as internal gates, are composed of basic Boolean functions such as logical OR ($\lor$), logical AND ($\land$), and logical NOT ($\neg$).

\noindent
The output of the circuit, denoted as $C(x)$, is determined recursively by applying the logical operations from the input gates through the graph until reaching the root.
Thus, a Boolean circuit defines a function mapping inputs from $\{0,1\}^k$ to outputs in $\{0,1\}$.
It is also possible to consider circuits with a multiple output gates, allowing the computation of functions from $\{0,1\}^k$ to $\{0,1\}^\ell$, where $\ell$ represents the number of output gates.

\paragraph{Size and depth}
The \emph{size} of a circuit, denoted as $|C|$, is the number of nodes present within the graph.
Additionally, we define the \emph{depth} of the circuit as the longest directed path within it, capturing the length of the computational flow from the input gates to the output.

\absatz
In \cref{fig:XOR-circuit}, a circuit is depicted that performs the logical operation $\mathsf{XOR}$ by receiving two inputs and producing an output of 1 if exactly one of the inputs is 1; otherwise, it outputs 0.

\begin{figure}[!ht]
    \centering
    \includegraphics[]{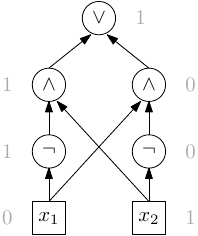}
    \caption{A circuit of size 7 and depth 3 performing the logical $\mathsf{XOR}$ operation on a 2-bit input.}
    \label{fig:XOR-circuit}
\end{figure}

\paragraph{Circuit Families}
In traditional circuit theory, circuits are limited to operating on a fixed input size.
However, we need a model that can handle inputs of arbitrarily long strings as input.
As is customary, we thus use a \emph{circuit family}: a collection $(C_n)_{n \in \mathbb{N}}$, where each circuit $C_n$ has $n$ inputs gates.
Consequently, the size and depth of circuits within this family become functions of $n$, allowing for flexibility in handling inputs of varying lengths.

\absatz
We now recall the definitions of two fundamental classes of circuit families.

\begin{definition}[The Class $\AC^0$]
A language $L$ belongs to the class $\AC^0$ if it can be decided by a circuit family $(C_n)_{n\in\mathbb{N}}$, where each circuit $C_n$ is constructed only using gates from the set $\{\land, \lor, \neg\}$. Furthermore, the circuits in $(C_n)_{n\in\mathbb{N}}$ are required to have polynomial size and constant depth in $n$.
\end{definition}

\begin{definition}[The Class $\TC^0$]
Let $\mathsf{majority}$ be the function that takes a sequence of bits as input and returns 1 if the number of 1s in the sequence is greater than the number of 0s, and 0 otherwise.
A language $L$ is in $\TC^0$ if it can be decided by a circuit family $(C_n)_{n\in\mathbb{N}}$, where each circuit $C_n$ is constructed using gates from the set $\{\land, \lor, \neg, \mathsf{majority}\}$.
Similar to $\AC^0$, the circuits in $(C_n)_{n\in\mathbb{N}}$ have polynomial size and constant depth in $n$.
\end{definition}

\paragraph{Uniform Circuit Families}
A family of circuits $(C_n)_{n\in\mathbb{N}}$ is called \emph{logspace uniform}, or simply \emph{uniform}, if there exists a Turing machine (TM) that can compute $C_n$ from $1^n$ (the number $n$ in unary notation) using $O(\log n)$ space.
In particular, uniform $\TC^0$ is the set of languages that can be decided by a uniform $\TC^0$ circuit family.

\noindent
In the context of this paper, the transformers under study operate on vectors over $\mathbb{F}_p$.
In circuit representations, we typically assume that elements of $\mathbb{F}_p$ are encoded as bit strings of length $p$, obtained by concatenating the mantissa and exponent in binary notation.
To ensure consistency, these bit strings are padded with leading zeroes after the sign bit, making both components exactly $p/2$ in length.

\noindent
A mapping $f\colon (\F_p^k)^n\to(\F_p^k)^n$ is considered \emph{uniformly $\AC_0$ computable} (or \emph{uniformly $\TC_0$ computable}) if there exists a uniform $\AC_0$ (or $\TC_0$) circuit family that, given the bit string representation of $x_1\cdots x_n\in (\F_p^k)^n$ as input, computes $f(x_1,\dots,x_n)$.

\subsection{Transformers} \label{sec:transformers}

A transformer model is composed of a finite number of layers, where each layer comprises multiple so-called attention heads working in parallel, followed by a feed-forward network.
\cref{fig:layer} provides a visual representation of the layer's structure, the arrangement of an individual attention head within the layer as well as the internal configuration of an attention head can be observed in \cref{fig:transformer}.

\begin{figure}
\centering
\includegraphics[width=\columnwidth]{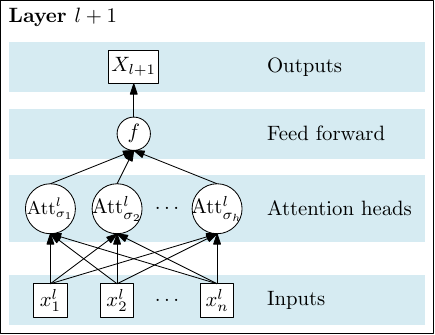}
\caption{Schematic representation showcasing the structure of a transformer layer $l+1$ with the outputs of layer $l$ as inputs, namely $X_l = (x_1^l, \ldots, x_n^l)$, attention heads Att$_{\sigma_1}^l,\ldots$ Att$_{\sigma_h}^l$ and them being combined with a feed forward network $f$ to produce the output $X_{l+1}$.\label{fig:layer}}
\end{figure}

\noindent
In this paper, particular emphasis is placed on the attention head component.
The attention head implements the attention mechanism, which facilitates the mapping of a sequence of $n$ vectors to a probability distribution over the set $[n]$, ultimately yielding a weighted sum of these vectors.

\noindent
The findings presented in this paper are based on the assumption that the vector components are floats with a precision of $O(\log n)$, where $n$ denotes the length of the input.
Consequently, the analysis is focused on transformers in which all internal computations occur within $\mathbb{F}_p$, where the value of $p$ is determined by $p=c_1\log n+c_0$, with $c_0$ and $c_1$ as constants greater than zero.
Throughout the remainder of this paper, the symbol $p$ represents this specific value, which depends on the input length.

\noindent
\begin{quote}
In the subsequent sections of this paper, we will consider a fixed natural number, denoted as $k$, which corresponds to the number of dimensions of the vectors handled by the transformer under consideration.
\end{quote}

\subsection{Attention}

Attention within a transformer model is computed using attention heads.
In this study, our analysis focuses on average-hard attention, as introduced by \citet{MerrillSabharwalSmith2021}.
We proceed by formalizing the concepts of average-hard attention and average-hard attention heads.

\begin{definition}[Average-hard attention function]\label{def:average-hard-attention-score}
    For $s=s_1\cdots s_n\in\F_p^+$, let $\mostsim(s)=\{i\in[n]\mid s_i=\max_{j\in[n]}s_j\}$.
    The \emph{average-hard attention function} $\xi$ maps $s$ to the probability distribution $\xi(s)\colon[n]\to[0,1]$ given by
    \begin{equation}\label{eq:average-hard-attention-function}
        \xi(s)_i = 
        \begin{cases}
        1/\mostsim(s) & \text{if } i \in \mostsim(s)\\
        0 & \text{otherwise.}
    \end{cases}
    \end{equation}
\end{definition}

Thus, average-hard attention distributes the entire probability mass evenly among the indices whose values $s_i$ are maximal.

\absatz
The attention head induced by $s$ computes the sequence of $n$ scores, denoted as $\scoring_{x_i}(X)=\scoring_{x_i}(x_1)\cdots\scoring_{x_i}(x_n)$, for each input $x_i$.
Subsequently, this sequence is transformed into a probability distribution using the average-hard attention function $\xi$, and the resulting attention value at position $i$ is the weighted sum of $x_1,\dots,x_n$ based on this distribution.
The formal definition follows.

\begin{definition}[Average-hard attention head]\label{def:head}
    Let $\scoring\colon\F_p^k \times \F_p^k \to \F_p$, be a linear space computable function called a \emph{scoring function}.
    We usually write $\scoring_x(x')$ for $\scoring(x,x')$, called the \emph{score} of $x'$ with respect to $x$.
    
    The \emph{average-hard attention head induced by \scoring} is the function $\head\scoring\colon(\F_p^k)^*\to\F_p^*$, such that for all $n\in\N$, $X=x_1\cdots x_n \in (\F_p^k)^*$, and $i\in[n]$, we have
    \begin{equation}\label{eq:head}
        \head\scoring(X)_i = \trunc{\xi(\scoring_{x_i}(X)) \cdot X\transpose}_p,
    \end{equation}
    where $X\transpose$ is the transpose of $X$ (viewed as an $n$-dimensional vector) and $\cdot$ denotes matrix multiplication.
\end{definition}

While the attention function in a typical transformer model is not average-hard, we specifically focus on the analysis of average-hard attention transformers in this paper.
For clarity, let $s_{ij}$ represent the score assigned to $x_j$ with respect to $x_i$, denoted as $s_{ij}=\scoring_{x_i}(x_j)$.
Applying \cref{def:average-hard-attention-score} to \cref{eq:head} yields
\begin{align}
    \head\scoring(X)_i
    &= \trunc{\sum_{j\in\mostsim(s)} \frac{x_j}{|\mostsim(s)|}}_p \notag\\
    &= \trunc{\frac{1}{|\mostsim(s)|} \cdot \sum_{j \in \mostsim(s)} x_j}_p \label{eqn:att}
\end{align}
for every $i\in[n]$.

\section{Main result}

In this section, we present a construction for attention that, when integrated into the construction of a constant-depth uniform threshold circuit as described by \citet{MerrillSabharwal2023}, enables the complete simulation of an average-hard attention transformer.
Our approach relies on the utilization of a fundamental lemma established by \citet{merrill2023parallelism} (note that this is an earlier version of the same paper).

\begin{lemma}[{\citet[Lemma 3]{merrill2023parallelism}}]\label{lem:2}
    Let $f\colon \{0, 1\}^{*} \to \{0, 1\}$ be a linear space computable boolean function and $c \in \R^{+}$.
    There exists a TM that, for all $n \in \N$, uses $O(\log n)$ space to map input $1^n$ to a circuit of size at most $n^c + c \cdot \log n + 1$ and depth 3 that computes $f$ on inputs of size $c \cdot \log n$.
\end{lemma}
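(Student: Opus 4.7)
The plan is to use the canonical disjunctive normal form (DNF) for $f$ on inputs of length $m = c \log n$. Any boolean function of $m$ variables can be written as
\[
f(y_1, \ldots, y_m) = \bigvee_{x \in f^{-1}(1)} \bigwedge_{i=1}^{m} \ell_i^{x_i},
\]
where $\ell_i^1 = y_i$ and $\ell_i^0 = \neg y_i$. This gives a circuit with three layers: up to $m$ NOT gates (one per variable, shared across all clauses), at most $|f^{-1}(1)| \le 2^m = n^c$ AND gates, and a single OR gate. The total size is thus at most $n^c + c \log n + 1$, and every input-to-output path traverses at most one NOT, one AND and one OR gate, so the depth is exactly $3$.

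For logspace uniformity, I would design a Turing machine that on input $1^n$ first computes $m$ and emits descriptions of $m$ NOT gates, using $O(\log n)$ space to track the current gate index. It then iterates over all $x \in \{0,1\}^m$ in lexicographic order using an $m$-bit counter, which again fits in $O(\log n)$ space. For each $x$, it simulates $f$ on input $x$; since $f$ is linear-space computable and $|x| = m = O(\log n)$, this simulation uses only $O(\log n)$ space. Whenever $f(x) = 1$, the machine emits a new AND gate together with $m$ wire descriptors pointing either to input $y_i$ or to the NOT gate for $y_i$, depending on $x_i$, and then a wire from this AND gate into the single OR gate that is emitted at the end.

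All gate identifiers, wire descriptions, and counters are integers bounded by a polynomial in $n$, so each occupies $O(\log n)$ bits, and the total space usage of the uniformity machine stays logarithmic. The main subtlety is the size accounting: because the $m$ NOT gates are produced once and shared across clauses rather than duplicated per clause, the NOT contribution is only $c \log n$, not $m \cdot n^c$, which is what makes the bound match exactly. I do not expect a genuine obstacle here; the argument is essentially the textbook DNF construction, with the observation that when $m = O(\log n)$, simulating a linear-space machine for $f$ on $m$ bits fits within the logspace budget of the uniformity machine.
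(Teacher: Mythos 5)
Your proposal is correct. Note that the paper itself gives no proof of this lemma -- it is imported verbatim from \citet{merrill2023parallelism} -- so there is no in-paper argument to compare against; your canonical-DNF construction (with the $m$ shared NOT gates, at most $2^m = n^c$ AND gates, one OR gate, and logspace uniformity obtained by enumerating all $x \in \{0,1\}^m$ and simulating the linear-space machine for $f$ on each, which fits in $O(\log n)$ space because $m = c\log n$) is exactly the standard proof given in that cited source, and your size and depth accounting matches the stated bound.
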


In our paper, we will have functions that transform bit-sequences to bit-sequences and not just to $\{0,1\}$. 
Here, \cref{lem:2} still suffices.
In principle, we could have the input length as an additional input to the circuit, e.g., using a one-hot vector which is 1 at the position of the bit we want to output.
Then we take all these circuits for the entire input length, just copy them and iterate over the position (first one outputs bit 1, second outputs bit 2, ...).
That does not change the size of the circuit significantly, since we have a size of at most $n^c$.
The length is polynomial in $\log n$, so we can produce them all separately.
Hence, \cref{lem:2} can be and is used for functions from now on which output a bit-sequence, e.g., addition.

\absatz
Note that \cref{lem:2} does not mean that $f$ is computable by uniform $\TC^0$ circuits.
The circuits work on input of size $\log n$ and are thus, relative to this, exponentially large.
However, used in a circuit that is applied to a sequence of $n$ values of size $\log n$ each (such as the bit string representation of elements of $\F_p$ the size does indeed become polynomial (now in $n \cdot \log n$)).
This lemma provides a significant implication: certain key operations performed by a transformer head can be effectively implemented using circuits.

\begin{lemma}\label{lem:transformer operations}
Let $p=O(\log n)$.
The functions listed below can be computed by uniform families of  $\TC^0$ circuits, of size polynomial in $n$:
\begin{enumerate}
\item $\scoring\colon\F_p^k \times \F_p^k \to \F_p$: Every scoring function $\scoring$,
\item $\fmax\colon\F_p^n\to\F_p$: computes the maximum of its arguments,
\item $\feq\colon\F_p\times\F_p\to\{0,1\}$: such that $\feq(x,y)=1$ if and only if $x=y$,
\item $\fsel\colon\F_p^k\times\{0,1\}\to\F_p^k$: such that $\fsel(x,y)=x$ if $y=1$ and $\fsel(x,y)=0$ otherwise, for all $x\in\F_p$,
\item\label{sum uniform} $\fsum\colon(\F_p^k)^n\to\F_p^k$: given as the function $\fsum(x_1,\dots,x_n)=\trunc{\sum_{i=1}^nx_i}_p$ for all $x_1,\dots,x_n\in\F_p^k$, and
\item\label{div uniform} $\fdiv\colon\F_p^k\times[n]\to\F_p^k$: given by $\fdiv(x,d)=\trunc{x/d}_p$ for all $x\in\F_p^k$ and $d\in[n]$, where $x / d$ is defined componentwise.
\end{enumerate}
\end{lemma}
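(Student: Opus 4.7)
I would split the six items into two groups according to the size of their input. Items 1, 3, 4, and 6 each act on inputs of total bit-length $O(\log n)$: the scoring function $\scoring$ reads two $k$-tuples of floats so its input has length $2kp = O(\log n)$; $\feq$ reads two floats; $\fsel$ reads a vector and one bit; $\fdiv$ reads one vector and an index in $[n]$ that is encoded in $O(\log n)$ bits. Each of the underlying functions is linear-space computable (trivially for $\feq$ and $\fsel$, by assumption for $\scoring$, and by standard schoolbook division over $O(\log n)$-bit operands for $\fdiv$). Hence \cref{lem:2} hands me, uniformly from $1^n$, a depth-$3$ circuit of size $n^c + c\log n + 1$ computing each one. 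Since these operations are pointwise along the sequence of $n$ positions, laying $n$ such copies in parallel yields a uniform circuit of size polynomial in $n$ and constant depth.

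For item 2 I would compute $\fmax$ over $n$ floats by building, for each ordered pair $(i,j)$, the comparison bit $c_{ij}=[x_i \succeq x_j]$ with a fixed index-tiebreak; each $c_{ij}$ is an $O(\log n)$-bit function, so \cref{lem:2} again produces a local depth-$3$ circuit of polynomial size, and there are only $n^2$ such gadgets. I then AND the $n$ comparison bits $c_{i1},\dots,c_{in}$ to obtain a winner indicator $w_i$, and compute the output coordinate-wise as $\bigvee_{i=1}^n \fsel(x_i, w_i)$. All aggregation is $\AC^0$ on $O(n^2\log n)$ wires, so the whole construction is constant-depth and polynomial-size, and since $\AC^0 \subseteq \TC^0$ uniformly, we are done.

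Item 5 is the main obstacle, because the input size is $\Theta(nk\log n)$ and \cref{lem:2} cannot be applied monolithically. The strategy is the standard float-alignment reduction to iterated integer addition: first invoke item 2 componentwise to get the maximum exponent $z^\ast$; then, for each summand $m_i \cdot 2^{z_i}$, right-shift $m_i$ by $z^\ast - z_i$ bits using a barrel shifter (the shift amount is $O(\log n)$, and a barrel shifter on $O(\log n)$-bit operands is a local function amenable to \cref{lem:2}); then apply iterated integer addition to the $n$ aligned, sign-extended mantissas; finally renormalise and invoke the truncation bracket $[\cdot]_p$, which is again a local $O(\log n)$-bit operation. The nontrivial ingredient is that iterated addition of $n$ integers each of $O(\log n)$ bits is in uniform $\TC^0$, a classical result that I would cite rather than reprove. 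Componentwise over the $k$ coordinates multiplies the size by $k=O(1)$, keeping the total polynomial.

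\textbf{Where the difficulty concentrates.} Items 1, 3, 4, and 6 are essentially \enquote{compile \cref{lem:2}}; item 2 is a straightforward tournament of $n^2$ local comparisons with $\AC^0$ aggregation. The only step with actual content is the iterated float sum, whose correctness rests on (i) the classical $\TC^0$-ness of iterated integer addition and (ii) verifying that the alignment/renormalisation wrappers are $\AC^0$. Uniformity survives composition because logspace-uniformity is closed under functional composition, so combining the logspace TM of \cref{lem:2} with the logspace TM that generates the iterated-addition circuit yields a single logspace TM generating the final $\TC^0$ circuit.
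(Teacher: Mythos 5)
Your treatment of items 1, 3, 4, and 6 is essentially the paper's: each is a linear-space computable function on an input of $O(\log n)$ bits, so \cref{lem:2} compiles it uniformly into a small constant-depth circuit, and $n$ parallel copies keep the size polynomial. You diverge on items 2 and 5, in both cases doing more work than the paper. For $\fmax$ the paper simply groups it with $\feq$ and $\fsel$ as computable by a TM in constant space and leaves the reduction to circuits implicit; since the input to $\fmax$ is $\Theta(n\log n)$ bits, \cref{lem:2} does not apply to it monolithically, and your tournament of $n^2$ pairwise comparators (each a genuine $O(\log n)$-bit function) followed by $\AC^0$ winner selection is the honest way to make that step precise --- a gain in rigour over the paper's one-line dismissal. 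For item 5 the paper outsources the whole claim to Lemma~5 of \citet{MerrillSabharwal2023}, whereas you re-derive it via alignment and iterated integer addition. Your architecture is right, but one quantitative claim is wrong: in $\F_p$ the exponent ranges over $[-q,q]$ with $q=2^{\lfloor p/2\rfloor-1}-1=\Theta(n^{c_1/2})$, so the shift amounts $z^\ast-z_i$ and the aligned mantissas can be polynomially long in $n$, not $O(\log n)$; the barrel shifter and the final renormalisation are therefore not \enquote{local $O(\log n)$-bit operations} and cannot be handed to \cref{lem:2}. The argument still goes through --- iterated addition of $n$ integers of $\poly(n)$ bits each is in uniform $\TC^0$, and shifting and leading-one detection on $\poly(n)$-bit words are uniform $\AC^0$ --- but those steps need to be justified directly rather than through \cref{lem:2}. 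Citing the existing summation lemma, as the paper does, sidesteps this issue entirely.
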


\begin{proof}
The computability of scoring functions in linear space follows directly from their definition.
The functions $\fmax$, $\feq$, and $\fsel$ are evidently computable by a TM operating within constant space, as their operations involve simple comparisons and selections.

Statement \ref{sum uniform} corresponds to a reformulation of Lemma~5 by \citet{MerrillSabharwal2023}.
According to this lemma, the function $\fsum$ can be computed by a uniform family of  $\TC^0$ circuits with polynomial size.
Thus, the summation operation can be effectively executed within this computational framework.

We now look at Statement~\ref{div uniform}, first considering integer division and extending this to floating-points.
It is well-established, that for computation of integer division, a TM operating in linear space can perform this operation.
One approach involves left-shifting the second operand by the maximum number of bits, denoted as $k$, such that it does not exceed the value of the first operand.
By subsequently adding $2^k$ to the result and subtracting the bit-shifted second operand from the first operand, the division operation can be iteratively carried out.
Extending this algorithm to floating-point numbers is straightforward, as it primarily involves subtracting the exponents.
Finally, the component-wise extension to $\F_p^k$ is simple.
\end{proof}

\absatz
In the following, we establish the existence of a TM that, given an input of $1^n$, can compute a circuit $C_n$ in logarithmic space.
This circuit, denoted as $C_n$, effectively simulates the operation of a strong average-hard head.

\begin{theorem}\label{thm:main}
The function computed by an average-hard attention head, as defined in \cref{def:head}, can be effectively computed using a uniform family of  $\TC^0$ circuits of polynomial size.
\end{theorem}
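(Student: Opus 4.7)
The plan is to decompose the head's computation, as expressed in \cref{eqn:att}, into the primitive operations supplied by \cref{lem:transformer operations}, and then to argue that their constant-depth composition yields a uniform $\TC^0$ circuit of polynomial size. For each input position $i\in[n]$ the output $\head\scoring(X)_i$ will be computed in parallel, so nothing about the overall depth depends on $n$.

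First I would, for every pair $(i,j)\in[n]^2$ in parallel, compute the score $s_{ij}=\scoring_{x_i}(x_j)$ using a copy of the scoring-function circuit from part (1) of \cref{lem:transformer operations}. Next, for each $i$, I would take the maximum $M_i=\fmax(s_{i1},\dots,s_{in})$ (part 2), compute indicator bits $b_{ij}=\feq(s_{ij},M_i)$ (part 3), and mask the inputs via $y_{ij}=\fsel(x_j,b_{ij})$ (part 4). Summing with $S_i=\fsum(y_{i1},\dots,y_{in})$ (part 5) and counting $c_i=\sum_{j=1}^n b_{ij}\in[n]$ (an $n$-bit counter, which is in $\TC^0$) yields exactly the numerator $\sum_{j\in\mostsim(s_{i\cdot})} x_j$ and denominator $|\mostsim(s_{i\cdot})|$ appearing in \cref{eqn:att}. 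Finally, $\fdiv(S_i,c_i)$ from part (6) produces the head output.

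Because each stage stacks on the previous one in constant depth and uses $O(n^2)$ parallel copies of polynomial-size sub-circuits, the whole circuit has constant depth and polynomial size. For uniformity, the constructing TM only has to iterate over pairs $(i,j)$ (each encodable in $O(\log n)$ bits) and concatenate the circuits produced by the logspace TMs guaranteed by \cref{lem:transformer operations}; composing logspace-computable maps in this way stays in logspace.

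The hard part, I expect, will be reconciling the single outer truncation $\trunc{\cdot}_p$ in \cref{def:head} with the two truncations introduced by $\fsum$ and $\fdiv$ in the composition above. One either argues that the intermediate values already lie in $\F_p$ so that the inner rounding is idempotent, or carries out the summation at $O(\log n)$ higher precision (still polynomial-size) and truncates exactly once at the end. Since $p=O(\log n)$ and the transformer model's arithmetic is itself defined by truncating after each primitive operation, the nested composition faithfully realises the intended semantics of the head.
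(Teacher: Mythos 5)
Your decomposition --- scores for all pairs $(i,j)$, then $\fmax$, $\feq$, $\fsel$, two summations (the vector sum and the count of maximal indices), and finally $\fdiv$, with uniformity obtained by a logspace TM iterating over index variables in $[n]$ and splicing together the sub-circuits from \cref{lem:transformer operations} --- is exactly the construction the paper uses in its proof of \cref{thm:main}. The truncation-composition subtlety you flag (one outer $\trunc{\cdot}_p$ in \cref{eqn:att} versus separate roundings inside $\fsum$ and $\fdiv$) is genuine and the paper passes over it silently, so your explicit acknowledgement is if anything more careful than the original.
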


\begin{proof}
A schematic representation of the circuit structure for input size $n$, illustrating how it computes the attention vector for the $i$-th input position from $n$ vectors $x_1,\dots,x_n \in \F_p^k$, is presented in \cref{fig:transformer}.
The circuit structure closely adheres to the specifications outlined in \cref{def:head}.

\begin{figure}[!ht]
    \centering
    \includegraphics[width=\columnwidth]{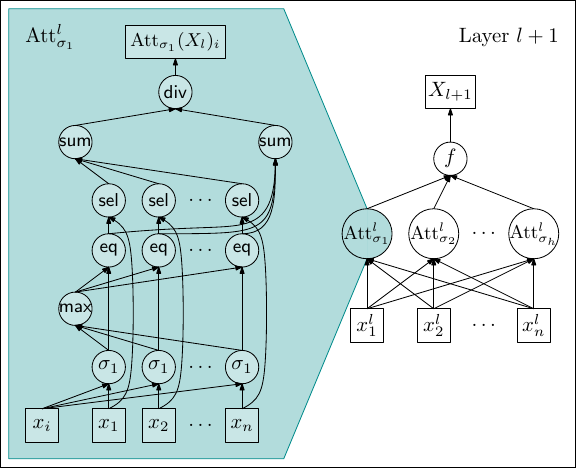}
    \caption{Schematic representation showcasing the structure of a transformer layer (as in \cref{fig:layer}) and one of its average-hard attention heads simulated by a circuit.}
    \label{fig:transformer}
\end{figure}

\absatz
From \cref{lem:transformer operations}, it is evident that all the constituent elements employed in constructing the circuit depicted in \cref{fig:transformer} are uniform families of  $\TC^0$ circuits of polynomial size.
By comparing the various circuit levels with the specifications outlined in \cref{def:head} and utilizing \cref{eqn:att} for the topmost level, it becomes apparent that the circuit accurately computes $\head\scoring(x_1\cdots x_n)_i$.
Moreover, due to the constant depth and polynomial size characteristics of each individual building block, the overall circuit also possesses these required properties.

\absatz
To complete our argument, we need to establish that the circuit depicted in \cref{fig:transformer} can be constructed in logarithmic space by a TM that takes $1^n$ as input.
Given that each of the sub-circuits can be constructed in logarithmic space (as stated in \cref{lem:transformer operations}), our main focus is to demonstrate that the interconnection of the individual sub-circuits, as depicted by the edges in \cref{fig:transformer}, can also be computed within logarithmic space.
Specifically, we aim to show that a fixed number of loops, utilizing loop variables that range between 1 and $n$, are sufficient to generate both the sub-circuits and the edges connecting them.\footnote{It is worth noting that each of the edges shown in \cref{fig:transformer} represents a bundle of $p$ edges, thereby necessitating an additional internal loop variable to generate each of them.}
Since the former is self-evident, we will now focus our attention on the latter aspect.

\absatz
To construct the structure presented in \cref{fig:transformer} for each $i \in [n]$, it is necessary to maintain a variable that tracks the index $i$.

\noindent
To generate the `scores' level and its input edges, an additional loop variable (also ranging from $1$ to $n$) is required to keep track of the index $j \in [n]$ of the sub-circuit being added to the overall circuit, responsible for implementing $\scoring$.
For each $j$, edges are added from both the $j$-th input gate and the $i$-th input gate.
The same approach is employed for the `max' and `select' levels.

\noindent
In the `maximum' level, only one loop variable $j$ is needed to establish edges from each of the $n$ scoring sub-circuits to the single $\fmax$ circuit.
A similar process is followed for the two summation sub-circuits at the `summation' level.
Lastly, there are only two edges that connect to the sub-circuit at the `divide' level.
\end{proof}

\absatz
By incorporating the construction presented in the proof of \cref{thm:main} into the construction of a constant-depth uniform threshold circuit described by \citet{MerrillSabharwal2023}, we achieve a complete simulation of an average-hard attention transformer.
Due to the similarity with the proof provided by \citet{MerrillSabharwal2023}, we outline the proof for brevity.

\begin{theorem}
    Every language that can be decided by a transformer with average-hard attention is in uniform $\TC^0$.
\end{theorem}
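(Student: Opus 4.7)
The plan is to lift \cref{thm:main} from a single attention head to a full transformer by composing constantly many layers. Fix a transformer with $L$ layers, each containing a constant number $h$ of average-hard attention heads in parallel followed by a feed-forward network $f_\ell$, and fix an input length $n$. First I would build the initial embedding $X_0 \in (\F_p^k)^n$ from the input $w \in \Sigma^n$ by a uniform $\TC^0$ circuit: position-wise, the map from $(w_i,i)$ to the embedding vector operates on $O(\log n)$ bits and is linear-space computable, so \cref{lem:2} (extended to bit-sequence outputs, as remarked after it) yields a circuit of polynomial size and constant depth applied in parallel at each of the $n$ positions.

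Next, for each layer $\ell \in [L]$ and each position $i \in [n]$, I would realize the $h$ attention heads by $h$ copies of the circuit from \cref{thm:main}, each uniform, constant-depth and polynomial-size. Their outputs are concatenated with $x_i^\ell$ and fed into a copy of the circuit computing $f_\ell$, which, assuming $f_\ell$ is linear-space computable on vectors of $O(\log n)$ bits (as in the formalization adopted from \citet{MerrillSabharwal2023}), is once again a uniform $\TC^0$ circuit by \cref{lem:2}. Stacking these $L$ layers yields a circuit whose total depth is the sum of $L$ constants and whose total size is polynomial in $n$. A final linear-space computable readout applied to a designated output position produces the accept/reject bit and is wrapped in one more uniform $\TC^0$ block via \cref{lem:2}.

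For uniformity, I would describe a Turing machine that, on input $1^n$, loops over the constant-many layer indices $\ell \in [L]$, the position indices $i \in [n]$, the head indices $a \in [h]$, and the $p = O(\log n)$ bit indices of each wire, invoking the logspace sub-procedures guaranteed by \cref{thm:main} and \cref{lem:2} to emit the gates and edges. Since $L$ and $h$ are constants and every loop variable fits in $O(\log n)$ bits, the whole generator runs in $O(\log n)$ space, certifying that the resulting family is a uniform $\TC^0$ circuit family deciding the language.

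The main obstacle I expect is purely bookkeeping: cleanly specifying the wire-level interconnection between consecutive layers, i.e.\ routing each output wire of a head at position $i$ in layer $\ell$ to the corresponding input wire of $f_\ell$ at position $i$, and the outputs of $f_\ell$ into the heads of layer $\ell+1$, while respecting the $O(\log n)$-space budget. As in the last part of the proof of \cref{thm:main}, this reduces to enumerating positions, bit indices, and a constant number of layer and head indices, so the space bound is comfortably met and the argument can be kept brief, matching the outline promised before the theorem statement.
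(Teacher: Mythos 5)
Your proposal is correct and follows essentially the same route as the paper's proof sketch: positional encoding via a logspace-generable position-wise circuit, layer-by-layer stacking with \cref{thm:main} supplying the attention-head sub-circuits, and uniformity by enumerating positions, heads, layers, and bit indices in $O(\log n)$ space. The only cosmetic difference is that you derive the feed-forward and readout components directly from \cref{lem:2}, whereas the paper defers those (identical) components to the construction of \citet{MerrillSabharwal2023} and simply swaps in the average-hard attention sub-circuit.
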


\begin{proof}[Proof sketch]
Let $\Sigma = {a_1, \dots, a_m}$ be our alphabet, and let $\omega = a_{i_1}, \ldots, a_{i_n}$ be our input string, where $i_1, \ldots, i_n \in [m]$.
Layer 1 of the transformer receives a positional encoding $\enc(\omega) = \enc(a_{i_1,1}), \ldots, \allowbreak\enc(a_{i_n},n) \in \F_p^k$ as input $X_1$.
Two examples of positional encodings are binary encoding as $\enc(a_{i},j) = (i, j, 0, \ldots, 0)$ and one-hot encoding as $\enc(a_{i},j) = (i, 2^j, 0, \ldots, 0)$.

For each positional encoding $\enc$ (assuming it is log-precision), there exists a $\TC^0$ circuit family that takes the input $w$ (in some binary representation) and produces the output $\enc(w)$.
The existence of such circuits is straightforward for the examples given above, as a logspace-TM can create a circuit that copies the $n$ input symbols to the output and appends the remaining components $j, 0, \ldots, 0$ of the vector as constant outputs.
A counter for $j, 0, \ldots, 0$ is sufficient for this purpose.

The proof proceeds by induction on the number of layers. Since each layer transforms inputs in $(\F_p^k)^n$ to outputs in $(\F_p^k)^n$ by precondition, the induction is trivial.
The main point is to show that using \cref{thm:main}, a single layer can be simulated by a log-space-uniform $\TC^0$ circuit family.

It is crucial to note that all components of a log-precision layer of an average-hard attention transformer are identical to those by \citet{MerrillSabharwal2023}, except for the ones related to average-hard attention.
\citet{MerrillSabharwal2023} demonstrate that each of these components can be simulated by a uniform $\TC^0$ circuit family and can be combined uniformly into one circuit for the entire layer.

By replacing the sub-circuit used for softmax attention in \citet{MerrillSabharwal2023} construction with the circuit from construction \cref{thm:main}, we can obtain circuits for the average-hard attention layer.
\end{proof}

\section{Conclusions and Future Directions}

In conclusion, this paper has shown that log-precision transformers can simulate average-hard attention transformers.
This has significant implications for both theoretical analysis and practical applications of transformer models.

\absatz
Moving forward, there are several promising avenues for future research in this area.
Firstly, an in-depth investigation comparing the expressive power of average-hard and softmax attention transformers would provide valuable insights into the underlying mechanisms of these models.
Understanding whether they possess the same level of expressive capacity or if average-hard attention transformers are strictly less powerful (and to what extent) would shed light on the computational capabilities of transformers.

\absatz
Furthermore, exploring the implications of these findings for practical applications is crucial.
If average-hard attention transformers are found to be equivalent to log-precision transformers, it would provide a more efficient and simplified approach for implementing transformers.
On the other hand, if there are fundamental differences between the two models, it would be important to understand the impact of these differences on the performance and generalization capabilities of transformer-based systems.

\absatz
Addressing the challenge of establishing a comprehensive and concise definition of a transformer that can effectively accommodate various models is crucial for future research in this field.
In the specific context of the compared models in this study, the discrepancies in fundamental definitions posed significant challenges when comparing the models.
This issue extends beyond the scope of this particular paper and is a prevalent obstacle when comparing transformer models in theoretical research.
Therefore, it would be essential to establish a standardized definition that is accessible and convenient for researchers in the field of formal languages to utilize.

\section*{Acknowledgements}
The author would like to acknowledge the valuable feedback provided by Frank Drewes throughout, helpful comments by Gail Weiss, as well as the early-stage discussions with William Merrill, which contributed to the development of this paper.

\bibliographystyle{acl_natbib}
\bibliography{anthology,main}

\begin{thebibliography}{8}
\expandafter\ifx\csname natexlab\endcsname\relax\def\natexlab#1{#1}\fi

\bibitem[{Arora and Barak(2009)}]{arora2006computational}
Sanjeev Arora and Boaz Barak. 2009.
\newblock \href
  {http://www.cambridge.org/catalogue/catalogue.asp?isbn=9780521424264}
  {\emph{Computational Complexity: {A} Modern Approach}}.
\newblock Cambridge University Press.

\bibitem[{Liu et~al.(2023)Liu, Ash, Goel, Krishnamurthy, and
  Zhang}]{liu2023transformers}
Bingbin Liu, Jordan~T. Ash, Surbhi Goel, Akshay Krishnamurthy, and Cyril Zhang.
  2023.
\newblock \href {https://openreview.net/pdf?id=De4FYqjFueZ} {Transformers learn
  shortcuts to automata}.
\newblock In \emph{Proc. ICLR}.

\bibitem[{Merrill and Sabharwal(2023{\natexlab{a}})}]{MerrillSabharwal2023}
William Merrill and Ashish Sabharwal. 2023{\natexlab{a}}.
\newblock \href {https://doi.org/10.1162/tacl\_a\_00562} {The parallelism
  tradeoff: Limitations of log-precision transformers}.
\newblock \emph{Trans. ACL}, 11:531--545.

\bibitem[{Merrill and Sabharwal(2023{\natexlab{b}})}]{merrill2023parallelism}
William Merrill and Ashish Sabharwal. 2023{\natexlab{b}}.
\newblock \href {http://arxiv.org/abs/2207.00729} {The parallelism tradeoff:
  Limitations of log-precision transformers}.

\bibitem[{Merrill et~al.(2022)Merrill, Sabharwal, and
  Smith}]{MerrillSabharwalSmith2021}
William Merrill, Ashish Sabharwal, and Noah~A. Smith. 2022.
\newblock \href {https://doi.org/10.1162/tacl\_a\_00493} {Saturated
  transformers are constant-depth threshold circuits}.
\newblock \emph{Transactions of the Association for Computational Linguistics},
  10:843--856.

\bibitem[{Peng et~al.(2018)Peng, Schwartz, Thomson, and
  Smith}]{peng2018rational}
Hao Peng, Roy Schwartz, Sam Thomson, and Noah~A. Smith. 2018.
\newblock \href {http://arxiv.org/abs/1808.09357} {Rational recurrences}.

\bibitem[{Vaswani et~al.(2017)Vaswani, Shazeer, Parmar, Uszkoreit, Jones,
  Gomez, Kaiser, and Polosukhin}]{vaswani2017attention}
Ashish Vaswani, Noam Shazeer, Niki Parmar, Jakob Uszkoreit, Llion Jones,
  Aidan~N. Gomez, Lukasz Kaiser, and Illia Polosukhin. 2017.
\newblock \href
  {https://proceedings.neurips.cc/paper/2017/hash/3f5ee243547dee91fbd053c1c4a845aa-Abstract.html}
  {Attention is all you need}.
\newblock In \emph{Advances in Neural Information Processing Systems 30: Annual
  Conference on Neural Information Processing Systems, {NeurIPS}}.

\bibitem[{Weiss et~al.(2018)Weiss, Goldberg, and Yahav}]{weiss2018practical}
Gail Weiss, Yoav Goldberg, and Eran Yahav. 2018.
\newblock \href {https://doi.org/10.18653/v1/P18-2117} {On the practical
  computational power of finite precision {RNN}s for language recognition}.
\newblock In \emph{Proceedings of the 56th Annual Meeting of the Association
  for Computational Linguistics (Volume 2: Short Papers)}, pages 740--745,
  Melbourne, Australia. Association for Computational Linguistics.

\end{thebibliography}

\end{document}